\newtheorem{definition}{Definition}
\newtheorem{theorem}{Theorem}
\mathchardef\mhyphen="2D
\title{Scalable Solution Methods for Dec-POMDPs with Deterministic Dynamics}
\author{
    Yang You\textsuperscript{\rm 1},
    Alex Schutz\textsuperscript{\rm 2},
    Zhikun Li\textsuperscript{\rm 2},
    Bruno Lacerda\textsuperscript{\rm 2},
    Robert Skilton\textsuperscript{\rm 1},
    Nick Hawes\textsuperscript{\rm 2}
}
\begin{document}

\maketitle

\begin{abstract}
Many high-level multi-agent planning problems, such as multi-robot navigation and path planning, can be modeled with deterministic actions and observations.
In this work, we focus on such domains and introduce the class of Deterministic Decentralized POMDPs (Det-Dec-POMDPs)—a subclass of Dec-POMDPs with deterministic transitions and observations given the state and joint actions.
We then propose a practical solver, \textit{Iterative Deterministic POMDP Planning} (IDPP), based on the classic Joint Equilibrium Search for Policies framework, specifically optimized to handle large-scale Det-Dec-POMDPs that existing Dec-POMDP solvers cannot handle efficiently.
\end{abstract}

\section{Introduction}

Decentralized partially observable Markov decision processes (Dec-POMDPs) are widely used to model multi-agent decision-making under uncertainty and partial observability, where each agent acts based solely on its own action-observation history.
While highly expressive, Dec-POMDPs are difficult to solve.
Even for finite horizons, solving Dec-POMDPs optimally is NEXP-complete \cite{bernsteinComplexityDecentralizedControl2002}.
To reduce this complexity, \citeauthor{besse2009quasi} introduced the Quasi-Deterministic Dec-POMDP (QDet-Dec-POMDP) \cite{besse2009quasi}, which assumes deterministic transitions but retains stochastic observations.
While this quasi-deterministic structure simplifies some aspects of Dec-POMDPs, the stochastic observations can still significantly hinder scalability.

Motivated by the observation that in many real-world robotic mission planning scenarios, high-level decision-making often involves both deterministic action outcomes and effectively deterministic observations, our first contribution is to propose a further simplification of existing models: the Deterministic Decentralized POMDP (Det-Dec-POMDP).
In a Det-Dec-POMDP, uncertainty exists only in the initial state distribution, while both the transition and observation models are fully deterministic.
This model can be seen as a natural extension of deterministic POMDPs (Det-POMDPs) \cite{littmanAlgorithmsSequentialDecisionmaking1996} to the multi-agent setting.
Leveraging this fully deterministic structure, our second contribution introduces a practical solver called \textit{Iterative Deterministic POMDP Planning} (IDPP) that specifically optimized for solving large Det-Dec-POMDPs.
IDPP improves upon prior JESP frameworks \cite{nair2003taming, you2021solving, you2023monte} by exploiting deterministic system dynamics. In each iteration, it invokes an efficient Det-POMDP planner \cite{schutz2025finitestatecontrollerbasedoffline} to compute each agent’s best-response policy, ultimately converging to a Nash equilibrium policy set, as shown in \Cref{fig:solution-fig}.
Although simple, this practical enhancement significantly improves scalability and enables efficient planning in large Det-Dec-POMDPs that existing Dec-POMDP solvers struggle to handle.
Moreover, to facilitate future research on algorithm scalability, we introduce two scalable Det-Dec-POMDP benchmarks that scale to millions of states and thousands of observations per agent.
\begin{figure}[t!]
    \centering
\includegraphics[width=0.95\linewidth]{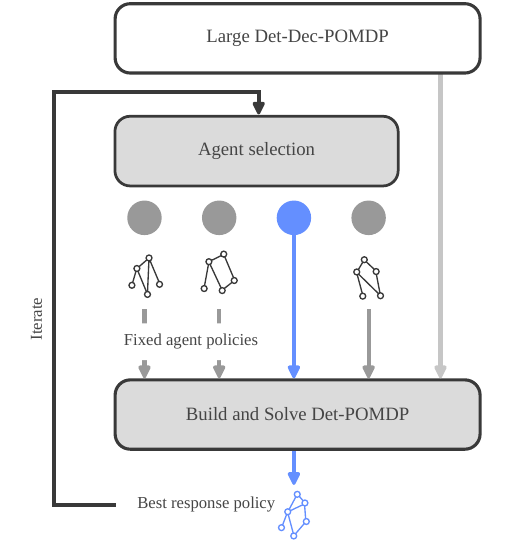}
    \caption{The approach of IDPP for solving large Det-Dec-POMDPs. The joint policy is decomposed into individual agent policies, which are initialized using a heuristic. Policies are improved using an iterative best-response process.}
    \label{fig:solution-fig}
\end{figure}
\section{Related Work}

\paragraph{Applications with Deterministic POMDPs.}

Recent advances in efficient algorithms have driven significant progress in Deterministic POMDP applications.
One particularly useful class involves environments where structural elements are initially uncertain but can be deterministically observed during execution. For example, a robot may have a prior over an object's location, or a waiter robot may choose to bring the most likely item before confirming a request.
These problems can be made deterministic at the action and observation level through appropriate abstractions, such as checking whether a door is open or detecting an object using a reliable classifier. 
Such examples appear in robot forest path planning \cite{schutz2025finitestatecontrollerbasedoffline} and robots navigation under centralized control \cite{stadler2023approximating}. 
Beyond robotics, areas such as circuit synthesis, sorting networks, and communication protocols can also be modeled as Det-POMDPs \cite{bonetDeterministicPOMDPsRevisited2009}. 
%

\paragraph{Solving General Dec-POMDPs.} 
State-of-the-art Dec-POMDP planning methods broadly fall into three categories.
The first frames Dec-POMDPs as inference problems, estimating optimal parameters for each agent’s policy—often represented as finite-state controllers (FSCs) \cite{amatoOptimizingFixedsizeStochastic2010, pajarinen2011efficient,pajarinen2011periodic,kumar2012anytime,kumar2015probabilistic,song2016solving}. These approaches are well-suited for infinite-horizon problems and can produce compact policies, but the underlying non-convex optimization often suffers from poor local optima, limiting solution quality.
The second transforms Dec-POMDPs into centralized sequential decision problems by constructing sufficient statistic spaces such as occupancy or information states \cite{szerMAAHeuristicSearch2012,macdermed2013point,dibangoye2016optimally}. This enables the use of powerful POMDP solvers to compute optimal joint policies, which are then decomposed into decentralized agent policies.
%
However, the exponential growth of the statistic space may limit scalability.
The third category relaxes global optimality by seeking Nash equilibrium solutions \cite{nair2003taming, bernstein2005bounded,bernstein2009policy, you2021solving,you2023monte}, where each agent’s policy is a best response to fixed policies of others.
This approach often scales better in infinite-horizon problems by reducing the problem to sequential single-agent POMDPs, at the cost of lacking optimality guarantees.

Another major line of work is multi-agent reinforcement learning (MARL), which tackles Dec-POMDPs through a learning perspective \cite{sunehag2017value,Rashid2018QMIX,Yu2021MAPPO}. To handle partial observability, they often incorporate recurrent architectures \cite{hochreiter1997long} that maintain internal memory of action-observation histories.
However, these methods often require extensive training time and struggle with sample inefficiency, especially when the agents' reward signals are sparse.



\paragraph{Motivation and Our Contribution.}

We observe that many applications of single-agent Det-POMDPs can naturally extend to decentralized multi-agent settings, forming Det-Dec-POMDPs—for example, generalizing single-robot navigation to multi-robot navigation in a forest. However, unlike the single-agent case, scalable solvers for large Det-Dec-POMDPs remain underdeveloped, and general Dec-POMDP methods often struggle with such domains. This lack of efficient methods may, in turn, limit the practical adoption of Det-Dec-POMDPs. In this work, beyond formalizing Det-Dec-POMDPs, we propose a simple yet practical solution method to enable future applications.

\section{Background}

\subsection{Partially Observable Decision Models}

A Partially Observable Markov Decision Process (POMDP) models a decision-making problem where the agent cannot directly observe the true underlying state. 
A POMDP is formally defined as a tuple \(\langle\mathcal{S}, \mathcal{A}, \Omega, \mathcal{T}, \mathcal{O}, \mathcal{R}, \gamma, b_0\rangle\), where \(\mathcal{S}\) denotes the set of states, \(\mathcal{A}\) the set of actions, and \(\Omega\) the set of observations. 
The transition function \(\mathcal{T}(s, a, s') = \Pr(s' \mid s, a)\) specifies the probability of reaching state \(s'\) after taking action \(a\) in state \(s\), while the observation function \(\mathcal{O}(a, s', o) = \Pr(o \mid s', a)\) gives the probability of observing \(o\) after arriving at state \(s'\) via action \(a\). 
The reward function \(\mathcal{R}(s, a)\) defines the immediate reward received for taking action \(a\) in state \(s\), and \(\gamma \in [0, 1)\) is the discount factor that models the agent’s preference for immediate rewards over future ones.  
The initial belief \(b_0\) denotes the initial state distribution.
In POMDPs, at each timestep, the agent updates a belief (a probability distribution over \(\mathcal{S}\)) based on the actions taken and observations received.
The goal of solving a POMDP is to find a policy that maps beliefs to actions in order to maximize expected discounted rewards over time.

A Decentralized POMDP (Dec-POMDP) extends POMDPs to cooperative multi-agent settings. In a Dec-POMDP, multiple agents jointly control the environment, each making decisions based on their local action-observations. 
Agents aim to coordinate implicitly through their policies to maximize a shared cumulative reward.
Planning in Dec-POMDPs is significantly more challenging than POMDPs due to this decentralized feature and the exponential growth of joint policy spaces.

Recent work has studied subclasses of POMDPs and Dec-POMDPs with deterministic or partially deterministic dynamics \cite{besse2009quasi, bonetDeterministicPOMDPsRevisited2009}. Deterministic POMDPs (Det-POMDPs) have deterministic state transition and observation functions, while Quasi-Deterministic POMDPs (QDET-Dec-POMDPs) feature deterministic transitions but stochastic observations. However, deterministic Dec-POMDPs (Det-Dec-POMDPs), which naturally extend Det-POMDPs to multi-agent settings, have not been specifically studied to the best of our knowledge.

\subsection{Finite-State Controllers}

A Finite-State Controller (FSC) is a compact representation of a policy for agents in POMDPs and Dec-POMDPs.
Instead of mapping full histories or beliefs to actions, an FSC encodes a policy as a finite automaton defined by a tuple $(\mathcal{N}, \psi, \eta, n^0)$, where:
$\mathcal{N}$ is a finite set of controller nodes (internal states);
$\psi: \mathcal{N} \rightarrow \mathcal{A}$ is the action selection function;
$\eta: \mathcal{N} \times \mathcal{O} \rightarrow \mathcal{N}$ is the node transition function based on observations, and
$n^0 \in \mathcal{N}$ is the initial node.
At each time step, the agent selects an action \textit{deterministically} $a = \psi(n)$ based on its current node $n \in \mathcal{N}$, and upon receiving an observation $o \in \mathcal{O}$, it transitions to a new node $n' = \eta(n, o)$ \textit{deterministically} .
%

FSCs are widely used in infinite-horizon planning \cite{baiMonteCarloValue2011, NIPS2011_eefc9e10, you2021solving, you2023monte} due to their ability to represent policies compactly and operate without tracking the full belief state or history. 
Note that, there are also \textit{stochastic} FSCs where action selection and node transition function are modeled with probability distributions. 
In this paper we stick to the deterministic version of the FSC for simplicity without sacrificing optimality \cite{oliehoek2008optimal}.

\subsection{Finding Nash-Equilibrium Solutions}

Joint Equilibrium-based Search for Policies (JESP) approaches \cite{nair2003taming, you2021solving, you2023monte} aim to find Nash Equilibrium solutions by iteratively computing one agent's best-response policy while fixing the policies of all other agents.
All JESP methods share a common algorithmic structure, given in \Cref{alg:JESP}:

\begin{algorithm}
\caption{General JESP Framework}
\label{alg:JESP}
\KwIn{Initial policies for all agents}
\KwOut{A Nash equilibrium policy set}
\While{policies have not converged}{
    Select the current optimizing agent $i$\;
    Fix other agents' policies $\pi_{\neq i}$ and construct agent $i$'s best-response model $\text{POMDP}_{\text{BR},i}$\;
    Solve $\text{POMDP}_{\text{BR},i}$ and update agent $i$'s policy $\pi_i$\;
}
\end{algorithm}

Infinite-Horizon JESP (InfJESP) \cite{you2021solving} extends JESP to infinite-horizon Dec-POMDPs by representing each agent's policy as a finite-state controller and constructing the best-response POMDP accordingly.
Each agent's best-response model $\text{POMDP}_{\text{BR},i}$ is solved using SARSOP \cite{kurniawati2008sarsop}, enabling planning over infinite horizons.
In InfJESP, agent $i$'s $\text{POMDP}_{\text{BR},i}$ uses an extended state space $e^t \in \mathcal{E}$ containing:
\begin{itemize*}
    \item $s^t$, the current environment state;
    \item $n_{\neq i}^t = \langle n_j^t \rangle_{j \neq i}$, the current nodes of other agents' FSCs;
    \item $\tilde{o}_i^t$, agent $i$'s current observation.
\end{itemize*}
This extended state enables defining a valid best-response POMDP with the following dynamics:
\begin{align*}
  &  \mathcal{T}_e(e^t, a^t_i, e^{t+1})  = Pr(e^{t+1}| e^t, a^t_i) \\
  & \quad = \sum_{o^{t+1}_{\neq i}}T(s^t, \langle  \psi_{\neq i}(n_{\neq i}^{t}), a^t_{i} \rangle, s^{t+1})
  \cdot \mathbf{1}_{ n_{\neq i}^{t+1} = \eta_{\neq i}(n_{\neq i}^{t}, o^{t+1}_{\neq i}) } \\
  & \qquad \cdot O(s^{t+1}, \langle \psi_{\neq i}(n_{\neq i}^{t}), a^t_{i} \rangle, \langle o^{t+1}_{\neq i}, o^{t+1}_{i} \rangle),
  \\ %
  & \mathcal{O}_e(a^t_i, e^{t+1}_i, o^{t+1}_i)
  = Pr( o^{t+1}_i | a^t_i, e^{t+1}_i) \\
  & \quad = Pr( o^{t+1}_i | a^t_i, \langle s^{t+1}, n^{t+1}_{\neq i}, \tilde o^{t+1}_i \rangle) %
  = \mathbf{1}_{o^{t+1}_i = \tilde  o^{t+1}_i},
  \\
  & r_e(e^t, a^t_i) =  r(s^t, a^t_i, \psi_{\neq i}(n_{\neq i}^{t}) ).
\end{align*}
where:
\begin{itemize}
    \item $\psi_{\neq i}(n_{\neq i}^t) = \langle \psi_j(n_j^t) \rangle_{j \neq i}$ denotes the other agents' action selections.
    \item $\eta_{\neq i}(n_{\neq i}^t, o_{\neq i}^{t+1}) = \langle \eta_j(n_j^t, \tilde{o}_j^{t+1}) \rangle_{j \neq i}$ denotes the other agents' FSC node transitions.
\end{itemize}

Note that, in this expression, the extended state $e^t$ explicitly includes agent $i$'s current observation $\tilde{o}_i^t$, which results in a deterministic observation function $O_e$ when constructing agent $i$'s best-response POMDP for any Dec-POMDP.
In MCJESP \cite{you2023monte}, the best-response model $\text{POMDP}_{\text{BR},i}$ is represented implicitly via a generative model $G_{\text{POMDP}_{\text{BR},i}}$, rather than explicitly enumerating all transitions. 
Agent \(i\)'s FSC policy is then optimized through Monte Carlo search \cite{silver2010monte}; specifically, each FSC node is associated with a belief, and MCJESP uses POMCP to compute the best action for that node, updating agent \(i\)'s FSC in a node-by-node manner.
This allowing MCJESP to scale to larger Dec-POMDPs by avoiding the computational overhead associated with explicit dynamics representation.

\section{Deterministic Dec-POMDPs}

This section formally defines the deterministic Dec-POMDP (Det-Dec-POMDP), an extension of the single-agent Det-POMDP to decentralized multi-agent settings.

\begin{definition} A Det-Dec-POMDP is a tuple $\langle \mathcal{I}, \mathcal{S}, \mathcal{A}, \Omega, \mathcal{T}, \mathcal{O}, \mathcal{R}, \gamma, b_0 \rangle$, where:
\begin{itemize*}
\item $\mathcal{I}$ is the finite set of agents, with $i \in \mathcal{I}$;
\item $\mathcal{S}$ is the set of states, $s \in \mathcal{S}$; 
\item $\mathcal{A} = \times_{i \in \mathcal{I}} \mathcal{A}_i$ is the set of joint actions, where $\mathcal{A}_i$ is the action set of agent $i$; 
\item $\Omega = \times_{i \in \mathcal{I}} \Omega_i$ is the set of joint observations, where $\Omega_i$ is the observation set of agent $i$; 
\item $\mathcal{T}(s,a,s'): \mathcal{S} \times \mathcal{A} \times \mathcal{S} \to  \{0,1\}$ is the deterministic transition function, mapping a state and joint action to a unique next state;
\item $\mathcal{O}(a,s',o): \mathcal{A} \times \mathcal{S} \times \Omega \to \{0, 1\}$ is the deterministic observation function, mapping a joint action and next state to a unique joint observation; 
\item $\mathcal{R}: \mathcal{S} \times \mathcal{A} \to \mathbb{R}$ is the immediate reward function; 
\item $\gamma \in (0, 1)$ is the discount factor for future rewards; 
\item $b_0 \in \Delta(\mathcal{S})$ is the initial belief, i.e., a probability distribution over initial states. 
\end{itemize*}
\end{definition}

Each agent $i$ selects its actions according to a local policy $\pi_i: (A_i \times \Omega_i)^* \to \mathcal{A}_i$, mapping its local action-observation history to an action.
In a Det-Dec-POMDP, uncertainty arises only from the initial state; thereafter, the system evolves deterministically according to $\mathcal{T}$ and $\mathcal{O}$.

In a single-agent Det-POMDP, the agent's belief about the true state becomes increasingly concentrated (the support of the belief monotonically decreases) as it gathers deterministic observations over time.
At each step, the agent can rule out states that are inconsistent with its action-observation history, gradually refining its belief until it converges to the true state.
Therefore, by exploiting both the deterministic dynamics and the concentrating nature of the belief, one can develop highly efficient planning methods for solving Det-POMDPs \cite{schutz2025finitestatecontrollerbasedoffline}.
However, even under deterministic dynamics, solving a Det-Dec-POMDP remains significantly more challenging than solving a Det-POMDP because each agent must reason not only about its own observations but also anticipate all other agents' possible histories and their induced behaviors, leading to a combinatorial explosion in the joint history and policy spaces.

\section{Iterative Deterministic POMDP Planning}

One major scalability bottleneck in existing state-of-the-art Dec-POMDP planning methods is the requirement to construct and reason over sufficient statistics \cite{szerMAAHeuristicSearch2012,dibangoye2016optimally}, such as distribution over these joint histories (also known as an occupancy state), to compute each agent’s optimal actions.
This involves evaluating an exponentially growing number of joint histories, which quickly becomes intractable as the problem size increases, especially in domains with thousands of observations per agent, even under finite-horizon settings.
Importantly, this issue persists in Det-Dec-POMDPs.
Although one might expect determinism to simplify planning, it does not alleviate the exponential growth in the joint history space.
This is because initial state uncertainty still leads to many possible observation sequences for each agent, resulting in a large number of possible joint histories that must be considered during computation—especially when a long horizon is required to complete the task.

In this work, we aim to efficiently address large-scale Det-Dec-POMDPs.
To tackle the scalability bottleneck, we propose a practical, optimized variant of the JESP approach \cite{you2021solving,you2023monte}, which finds Nash equilibrium solutions while leveraging recent advances in solving deterministic POMDPs efficiently.
%
%

\subsection{Best-Response Det-POMDP for Agent $i$}
\label{sec:br-det-POMDP}

We follow the same theoretical framework as InfJESP \cite{you2021solving}, where each agent $i$'s decision-making problem is formulated as a best-response POMDP when the FSC policies of the other agents are fixed.
Specifically, agent $i$ makes decisions by reasoning over a belief defined on an extended state space $\mathcal{E}$, where each $e^t \in \mathcal{E}$ is a tuple $\langle s^t, n^t_{\neq i}, \tilde{o}^t_i \rangle$.
\begin{theorem}
In a \textbf{Det-Dec-POMDP}, when the FSC policies of all agents except agent $i$ are fixed, the best-response model for agent $i$, denoted $\text{POMDP}_{\text{BR},i}$, is a \textbf{Det-POMDP}.
\end{theorem}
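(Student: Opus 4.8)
The plan is to verify that the best-response model $\text{POMDP}_{\text{BR},i}$, built over the extended state space $\mathcal{E}$ with $e^t = \langle s^t, n^t_{\neq i}, \tilde{o}^t_i\rangle$ exactly as in InfJESP, satisfies the three defining properties of a Det-POMDP: a deterministic transition function, a deterministic observation function, and stochasticity confined to the initial belief. Since the functions $\mathcal{T}_e$, $\mathcal{O}_e$, and $r_e$ are already given in the excerpt, the task reduces to checking determinism of each under the Det-Dec-POMDP assumptions. The observation function is immediate: $\mathcal{O}_e(a^t_i, e^{t+1}_i, o^{t+1}_i) = \mathbf{1}_{o^{t+1}_i = \tilde{o}^{t+1}_i}$ is $\{0,1\}$-valued by construction, hence deterministic for \emph{any} Dec-POMDP, as already noted after the dynamics are defined. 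No Det-Dec-POMDP assumption is needed here.

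The substance of the proof is showing that $\mathcal{T}_e$ is deterministic, i.e.\ that for each $(e^t, a^t_i)$ there is a unique $e^{t+1}$ with $\mathcal{T}_e(e^t, a^t_i, e^{t+1}) = 1$. I would trace determinism through the composition that defines $\mathcal{T}_e$. Given $e^t = \langle s^t, n^t_{\neq i}, \tilde{o}^t_i\rangle$ and agent $i$'s action $a^t_i$, the joint action $a^t = \langle \psi_{\neq i}(n^t_{\neq i}), a^t_i\rangle$ is fixed because the other agents' actions are determined by their deterministic FSCs. The deterministic transition $\mathcal{T}$ then selects a unique next state $s^{t+1}$; the deterministic observation $\mathcal{O}$ selects a unique joint observation $\langle o^{t+1}_{\neq i}, o^{t+1}_i\rangle$ given $s^{t+1}$ and $a^t$; and the deterministic node-update $\eta_{\neq i}$ selects a unique $n^{t+1}_{\neq i}$. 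Since $\tilde{o}^{t+1}_i$ is, by definition of $\mathcal{E}$, precisely this $o^{t+1}_i$, the whole successor tuple $\langle s^{t+1}, n^{t+1}_{\neq i}, \tilde{o}^{t+1}_i\rangle$ is pinned down, giving $\mathcal{T}_e \in \{0,1\}$.

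The step requiring the most care — and the main obstacle — is the summation over $o^{t+1}_{\neq i}$ appearing in the formula for $\mathcal{T}_e$. I would argue that this sum collapses to at most one nonzero summand: determinism of $\mathcal{O}$ forces the factor $\mathcal{O}(s^{t+1}, a^t, \langle o^{t+1}_{\neq i}, o^{t+1}_i\rangle)$ to vanish for every joint observation except the single one consistent with $s^{t+1}$ and $a^t$, which in turn fixes $o^{t+1}_{\neq i}$ and hence, through $\eta_{\neq i}$, the value of $n^{t+1}_{\neq i}$ in the indicator. One must additionally check the consistency condition that the fixed $o^{t+1}_i$ component matches the $\tilde{o}^{t+1}_i$ recorded in $e^{t+1}$, since otherwise the product of indicators is zero. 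Combining these facts shows that exactly one $e^{t+1}$ receives mass $1$ and all others receive $0$; here the contrast with the general Dec-POMDP case is sharp, where stochastic $\mathcal{O}$ would leave several surviving summands and make $\mathcal{T}_e$ genuinely probabilistic.

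Finally, I would note that the initial belief of $\text{POMDP}_{\text{BR},i}$ is induced from $b_0$, with the other agents' FSCs started at $n^0_{\neq i}$ and agent $i$'s initial observation recorded in the extended state, so all randomness originates solely from $b_0$. Together with the deterministic $\mathcal{T}_e$ and $\mathcal{O}_e$ established above, this matches the definition of a Det-POMDP exactly, completing the argument.
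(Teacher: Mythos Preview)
Your proposal is correct and follows essentially the same approach as the paper: verify that $\mathcal{O}_e$ is deterministic by construction, then show $\mathcal{T}_e$ is deterministic by arguing that the deterministic FSC action selection, the deterministic $\mathcal{T}$, and the deterministic $\mathcal{O}$ together pin down a unique successor extended state, with the key step being that determinism of $\mathcal{O}$ collapses the sum over $o^{t+1}_{\neq i}$ to a single term. Your treatment is, if anything, slightly more careful than the paper's --- you explicitly track the consistency of $\tilde{o}^{t+1}_i$ with the recorded component of $e^{t+1}$ and address the initial belief --- but the argument is the same.
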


\begin{proof}
When the policies $\pi_{\neq i}$ of the other agents are fixed, 
the transition function of agent $i$'s best-response model $\text{POMDP}_{\text{BR},i}$ is given by:
\begin{align*}
& \mathcal{T}_e(e^t, a^t_i, e^{t+1}) = \sum_{o^{t+1}_{\neq i}}\mathcal{T}(s^t, \langle \psi_{\neq i}(n_{\neq i}^{t}), a^t_{i} \rangle, s^{t+1}) \\
&\cdot \mathbf{1}_{ n_{\neq i}^{t+1} = \eta_{\neq i}(n_{\neq i}^{t}, o^{t+1}_{\neq i}) } \cdot \mathcal{O}(s^{t+1}, \langle \psi_{\neq i}(n_{\neq i}^{t}), a^t_{i} \rangle, \langle o^{t+1}_{\neq i}, o^{t+1}_{i} \rangle)
\end{align*}
In this function, the actions of the other agents are deterministically chosen by $\psi_{\neq i}(n_{\neq i}^{t})$, so the first part, $\mathcal{T}$, corresponds to the transition function of the Det-Dec-POMDP, which deterministically maps the current state and joint action to the next state.
The second part, the transition of the other agents’ FSC nodes, is also deterministic by the FSC definition.
The third part, $\mathcal{O}(s^{t+1}, \langle \psi_{\neq i}(n_{\neq i}^{t}), a^t_{i} \rangle, \langle o^{t+1}_{\neq i}, o^{t+1}_{i} \rangle)$, is deterministic because, in a Det-Dec-POMDP, $\mathcal{O}$ maps the next state and joint action to a unique joint observation.
This implies that there exists only one possible $o^{t+1}_{\neq i}$, and thus the summation can be eliminated.
Therefore, the entire transition function $\mathcal{T}_e$ is deterministic, i.e., $\mathcal{T}_e(e^t, a^t_i, e^{t+1}) \in \{0,1\}$ and equals 1 for exactly one $e^{t+1}$.
Moreover, $\mathcal{O}_e$ is already defined as a deterministic observation function in $\text{POMDP}_{\text{BR},i}$'s formulation.
Thus, $\text{POMDP}_{\text{BR},i}$ is a deterministic POMDP with deterministic dynamics.
\end{proof}

This reduction allows solving a Det-Dec-POMDP as a sequence of Det-POMDPs, where each agent computes a best response to fixed policies of others, leveraging efficient Det-POMDP solvers that exploit deterministic structure.

\subsection{Main Algorithm}
\label{sec:main_algo}

The main procedure of \textit{Iterative Deterministic POMDP Planning} (IDPP) for solving Det-Dec-POMDPs is shown in \Cref{alg:main}.
This procedure is adapted from the (MC)JESP scheme, with key modifications highlighted in blue.
The process begins with a heuristic initialization step (\cref{alg_line:init}), which is described in detail in \Cref{sec:init}. It then enters an iterative best-response loop where agents update their policies until convergence.
In each iteration, IDPP constructs a best-response deterministic POMDP for the selected agent $i$ (\cref{alg_line:build_detpomdp}) and solves it efficiently using Det-MCVI \cite{schutz2025finitestatecontrollerbasedoffline} (\cref{alg_line:solve_detpomdp}), a solver specifically tailored for deterministic POMDPs.
This design leverages the structural determinism of Det-Dec-POMDPs to significantly improve computational efficiency.
Although the adaptation appears simple, it results in a powerful and scalable framework for solving large-scale multi-agent decision-making problems modeled as Det-Dec-POMDPs.

\begin{algorithm}
\caption{Main Algorithm for Solving Det-Dec-POMDP}
\label{alg:main}
\KwIn{Deterministic Dec-POMDP model $G$}
\KwOut{Nash equilibrium policy set $\{ \pi_i \}_{i \in \mathcal{I}}$}

\textcolor{blue}{$\{ \pi_i \} \gets \text{HeuristicInitFSCs}(G)$}\;  \label{alg_line:init} 
$V \gets \emptyset$\;

\While{$V$ not converged}{
    $i \gets \text{GetNextAgent()}$\;
    $\pi_{\neq i} \gets \text{FixOthersPolicies}(i, \{ \pi_j \})$\;
    \textcolor{blue}{$G_{\text{BR},i} \gets \text{BuildBRDetPOMDP}(G, \pi_{\neq i})$}\; \label{alg_line:build_detpomdp}
    \textcolor{blue}{$\pi_i^{*} \gets \textbf{SolveDetPOMDP}(G_{\text{BR},i})$}\; \label{alg_line:solve_detpomdp}
    $\pi_i \gets \pi_i^{*}$\;
    $V \gets \text{Evaluate}(G, \{ \pi_j \})$\;
}
\Return $\{\pi_j \}$
\end{algorithm}

\subsection{Heuristic Initialization}
\label{sec:init}

In JESP-style methods~\cite{nair2003taming, you2021solving, you2023monte}, initializing agents with non-myopic policies can significantly reduce iterations, especially when coordination is essential.  
Prior work (e.g., InfJESP and MCJESP) uses centralized heuristics that plan joint policies over the joint observation space, which becomes inefficient in large domains.

We propose a new heuristic that avoids joint observations by planning over each agent's local observation space.  
When other agents' policies are fixed, agent~$i$ plans using a Det-POMDP with extended state $e^t = \langle s^t, n^t_{\neq i}, \tilde{o}^t_i \rangle$ (\Cref{sec:br-det-POMDP}).
At initialization, we replace $n^t_{\neq i}$ with a default MDP policy $\pi_{\text{MDP}}$ that maps states to actions: $a^t_{\neq i} \gets \pi_{\text{MDP}}(s^t)$.
This yields a deterministic model $\text{POMDP}_{\text{init},i}$ with state $e^t = \langle s^t, \pi_{\text{MDP}}, \tilde{o}^t_i \rangle$, which agent~$i$ uses to compute its initial policy.  
Since all components evolve deterministically, $\text{POMDP}_{\text{init},i}$ is itself a Det-POMDP. The full initialization procedure is given in \Cref{alg:init}, and $\pi_{\text{MDP}}$ is easily computed using standard value iteration.

\begin{algorithm}
\caption{HeuristicInitFSCs}
\label{alg:init}
\KwIn{The Det-Dec-POMDP model $G$}
\KwOut{All agents' initial policies}
$\pi_{\text{MDP}} \gets \text{GetDefaultPolicyMDP}(G)$\;
$\{ \pi_{\text{init}}\} \gets \emptyset$ \;
\For{each agent $i \in \mathcal{I}$}
{
 $G_{\text{POMDP}_{\text{init},i}} \gets \text{BuildInitDetPOMDP}(G,i,\pi_{\text{MDP}})$\;
 $\pi_{\text{init},i} \gets \textbf{SolveDetPOMDP}(G_{\text{POMDP}_{\text{init},i}})$ \;
 $\{ \pi_{\text{init}}\}  \gets \{ \pi_{\text{init}}\} \cup$ $\pi_{\text{init},i}$\;
}

\Return $\{ \pi_{\text{init}}\} $
\end{algorithm}

It is important to note that this heuristic initialization does not guarantee optimality.  
Each agent assumes others follow a fixed MDP policy, which may yield suboptimal initial policies in tightly coordinated scenarios.  
However, since these serve only as starting points, the subsequent IDPP process (\Cref{alg:main}) iteratively refines them via best-response updates, eventually converging to a Nash equilibrium.

\section{Experiments}

\paragraph{Experiment Setting.} In this work, we introduce two Det-Dec-POMDP benchmarks: \textit{Multi-Agent Canadian Traveler Problem} (MACTP) and \textit{Collecting} as in \Cref{fig:envs}.
Each is configurable with parameters such as grid size and number of agents.
For each problem instance, we perform 10 runs for each algorithm to compute the average return and time used.
Each runs's joint policies are evaluated with $10^5$ episodes from a random starting state sampled from $b_0$.
All environments are initialized with the same random seed to ensure each algorithm is solving the exact same instance.

\begin{figure}[t]
    \centering
    \includegraphics[width=\linewidth]{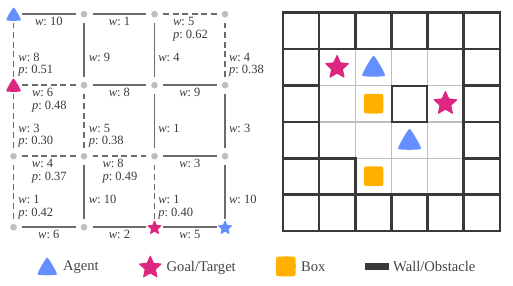}
    \caption{LEFT: An instance of MACTP$\langle 4, 2, 10\rangle$. RIGHT: An instance of Collecting$\langle 4, 4, 2, 2 \rangle$.}
    \label{fig:envs}
\end{figure}

\paragraph{Compared Algorithms.}
We compare the proposed method, IDPP, against the following baselines:

\begin{itemize}
\item \textbf{InfJESP} \cite{you2021solving} and \textbf{MCJESP} \cite{you2023monte}: State-of-the-art Dec-POMDP planners that compute Nash equilibrium solutions, representing infinite-horizon policies with finite-state controllers (FSCs). We use the official implementations provided by the authors.

\item \textbf{MAA*} \cite{szerMAAHeuristicSearch2012}: A heuristic search-based Dec-POMDP solver capable of computing optimal solutions for finite-horizon settings. We use the implementation from the MADP toolbox \cite{oliehoek2017madp}.

\item \textbf{Deterministic POMDP Heuristic}: The baseline described in \Cref{sec:init}, which decomposes the problem into independent Det-POMDPs for each agent, then solves each using DetMCVI \cite{schutz2025finitestatecontrollerbasedoffline} to produce individual FSCs.

\item \textbf{IQL} \cite{tan1993multi} and \textbf{MAPPO} \cite{Yu2021MAPPO}: Two popular MARL algorithms. Both are implemented in Python using PyTorch and employ LSTM-based policies to address partial observability.
\end{itemize}

All planning methods are implemented in C++, and a time limit of 10,000 seconds is imposed to evaluate their efficiency.
MARL methods adopt a fixed training budget of 10,000 episodes (each episode lasting up to 100 steps), rather than enforcing a time constraint.
All methods are evaluated based on their \textit{discounted accumulated rewards}.
The source code and parameter details are provided in the supplementary materials.

\subsection{Multi-Agent Canadian Traveler Problem}

The Multi-Agent Canadian Traveler Problem (MACTP) is generated by the tuple $\langle N, n_a, n_e\rangle$, where $N$ is the grid size, $n_a$ is the number of agents, and $n_e$ is the number of stochastic edges.
In MACTP, each edge $i$'s weight $d_i$ is randomly initialized from $\{1,\dots,10\}$.
Stochastic edges may be blocked or unblocked with a given probability; the edge's true status can only be observed when an agent reaches one of its incident vertices.
Stochastic edges and their blockage probabilities are also randomly initialized.
Goal vertices are assigned among the final $N^2 - n_e$ to $N^2$ nodes for each agent.
Each agent takes actions from $\mathcal{A}$, which is the action space consisting of $\{\uparrow, \rightarrow, \downarrow, \leftarrow, \circlearrowleft\}$, where $\circlearrowleft$ denotes a wait action.
The state space $\mathcal{S}$ combines the agents' positions and the edge states, resulting in $|\mathcal{S}| = (N^2)^{n_a} \times 2^{n_e}$.
At each time step, each agent observes the traversability of nearby edges and the locations of other agents.
A reward of $500$ is given when each agent reaches its corresponding goal position, and a cost equal to the edge distance $d_i$ is incurred for each successful movement action.
Therefore, the objective is for the agents to explore the environment and identify the shortest traversable paths to their goals.

In the MACTP domain, initial-state uncertainty creates a wide range of possible configurations.  
For instance, with two edges having blockage probabilities 0.3 and 0.4, the initial belief spans four blockage states (e.g., \(Pr(\text{both blocked}) = 0.12\), \(Pr(\text{edge1 blocked, edge2 not}) = 0.18\), etc.).  
Once blockage probabilities and edge weights are generated (using a fixed seed), they are known to the agents—only the blockage configuration remains uncertain.  
Although agents do not explicitly share observations, they infer environmental information by observing each other’s movements: an agent moving toward a vertex suggests traversable edges, while stopping or circling near an edge indicates possible blockage.  
This observational cooperation enables agents to indirectly gather knowledge and adapt their behavior accordingly.

\subsection{Collecting Problem}

The Collecting Problem is a Det-Dec-POMDP generated by the tuple 
$\langle H, W, n_a, n_b \rangle$,
where a team of $n_a$ agents operates on a structured grid of size $(H+2)\times(W+2)$, surrounded by untraversable wall cells. 
Within the interior $H \times W$ region, $n_b$ obstacle cells and $n_b$ goal cells are randomly placed, but their positions are fixed and known to all agents at the start of the problem.
The agents must cooperatively pick up and deliver $n_b$ indistinguishable boxes to the $n_b$ designated goal squares. 
Each successful delivery yields a reward of $+100$.
Agents choose actions from the set $\mathcal{A} := \{\uparrow, \rightarrow, \downarrow, \leftarrow, \circlearrowleft\}$.
Boxes are picked up or dropped automatically when an agent occupies the same cell.
To resolve potential conflicts, agent actions are executed in a fixed sequential order, making the problem asymmetric.
Each agent receives an observation $o_i$ comprising the $3\times3$ grid centered on its location, where each cell may indicate a wall, an empty space, a box, an agent, or a goal.
While the placement of obstacles and goal locations is fixed and known to agents, there is uncertainty regarding the initial state of the system. 
Specifically, agents' starting positions and the initial locations of the boxes are unknown, which introduces uncertainty in the initial belief.
This uncertainty is resolved as agents gather more information through their observations during executions.
The approximate state space size is:
\[
|\mathcal{S}| \approx (C \times 2)^{n_a} \times \binom{C}{n_b},
\]
where $C = H \times W - n_b$ is the number of free (e.g., non-wall, non-obstacle) cells, and the factor $2$ captures the binary carrying status of each agent.

In this problem, each agent can carry at most one box, and all boxes are indistinguishable, with any box deliverable to any goal location. 
Therefore, agents must coordinate to avoid redundant deliveries and resolve path conflicts, while considering the uncertainty from the initial state. 

\subsection{Results}
\label{sec:results}


\begin{table*}[t]
\centering
\scriptsize
\setlength{\tabcolsep}{3pt} 
\renewcommand{\arraystretch}{1.1} 
\begin{tabular}{@{}llccccccc@{}}
\toprule
& & \multicolumn{4}{c}{\textbf{MACTP} $\langle N, n_a, n_e \rangle$} & \multicolumn{3}{c}{\textbf{Collecting} $\langle W, H, n_a, n_t \rangle$} \\
\cmidrule(lr){3-6} \cmidrule(lr){7-9}
& \textbf{Instance} 
& $\langle3,2,5\rangle$ & $\langle4,2,8\rangle$ & $\langle4,2,12\rangle$ & $\langle5,2,14\rangle$ 
& $\langle4,3,2,2\rangle$ & $\langle4,4,2,3\rangle$ & $\langle5,5,2,4\rangle$ \\
\midrule
& $|\mathcal{S}|$   & $\sim 2.6\text{k}$ & $\sim 65.5\text{k}$ & $\sim 1.05\text{M}$ & $ > 10\text{M}$ & $\sim 4.7\text{k}$ & $\sim 52.9\text{k}$ & $\sim 2.82\text{M}$ \\
& $|b_0|$   & $2^5$ & $2^8$ & $2^{12}$ & $2^{14}$ & 30 & 112 & $\sim 2.73\text{k}$ \\
& $|\mathcal{O}_i|$   & 279 & 656 & 928 & $\sim 1.83\text{k}$ & 166 & 593 & $\sim 1.75\text{k}$ \\
\midrule

\multirow{2}{*}{IQL} 
& Return & $849.60 \pm 43.08$ & $696.33 \pm 45.92$ & $526.31 \pm 64.60$ & $455.62 \pm 83.22$ & $121.95 \pm 9.69$ & $205.44 \pm 11.64$ & $216.23 \pm 14.28$ \\
& Time   & -- & -- & -- & -- & -- & -- & -- \\
\midrule

\multirow{2}{*}{MAPPO} 
& Return & $808.84 \pm 39.10$ & $542.82 \pm 46.48$ & $427.97 \pm 42.19$ & $252.84 \pm 23.79$ & $125.17 \pm 7.60$ & $189.42 \pm 6.34$ & $197.63 \pm 5.17$ \\
& Time   & -- & -- & -- & -- & -- & -- & -- \\
\midrule

\multirow{2}{*}{Det-POMDP Heur.} 
& Return & $682.13 \pm 0.54$ & $705.57 \pm 8.26$ & $603.95 \pm 8.05$ & $516.84 \pm 35.77$ & $118.60 \pm 2.11$ & $197.92 \pm 2.33$ & $186.46 \pm 4.39$ \\
& Time   & $0.4 \pm 0.2$ & $8.2 \pm 1.4$ & $138.6 \pm 34.6$ & $710.3 \pm 46.8$ & $0.7 \pm 0.5$ & $15.3 \pm 1.1$ & $1064.8 \pm 109.8$ \\
\midrule

\multirow{2}{*}{InfJESP} 
& Return & $\textbf{918.37} \pm \textbf{0.28}$ 
& \multirow{2}{*}{\centering $^\ddagger$} 
& \multirow{2}{*}{\centering $^\dagger$} 
& \multirow{2}{*}{\centering $^\dagger$} 
& $\textbf{186.39} \pm \textbf{0.72}$ 
& \multirow{2}{*}{\centering $^\dagger$} 
& \multirow{2}{*}{\centering $^\dagger$} \\
& Time   & $23.8 \pm 1.4$ & & & & $44.6 \pm 6.4$ & & \\
\midrule

\multirow{2}{*}{MCJESP} 
& Return & $874.39 \pm 34.93$ & $743.76 \pm 70.83$ & $459.64 \pm 104.58$ & $531.13 \pm 82.86$ & $177.51 \pm 5.50$  & $252.97 \pm 16.51$ & $237.09 \pm 32.80$ \\
& Time   & $159.2 \pm 71.1$ & $197.6 \pm 41.0$ & $1381.3 \pm 130.4$ & $4115.8 \pm 659.6$ & $383.6 \pm 52.9$  & $628.8 \pm 42.3$ & $8509.6 \pm 684.3$ \\
\midrule

\multirow{2}{*}{\textbf{IDPP (Ours)}} 
& Return & $912.71 \pm 0.32$ & $\textbf{867.58} \pm \textbf{2.78}$ & $\textbf{798.47} \pm \textbf{2.40}$ & $\textbf{873.16} \pm \textbf{5.21}$ & $184.42 \pm 1.16$ & $\textbf{267.71} \pm \textbf{0.32}$ & $\textbf{315.56} \pm \textbf{2.81}$ \\
& Time   & $1.8 \pm 0.4$ & $15.3 \pm 2.6$ & $570.8 \pm 110.3$ & $1706.6 \pm 298.4$ & $2.4 \pm 0.5$ & $44.6 \pm 1.4$ & $4662.3 \pm 488.6$ \\
\bottomrule
\end{tabular}
\caption{
Performance (avg. return ± std) and computation time (in seconds) of algorithms on MACTP and Collecting instances. 
Each instance is defined by its structural parameters. 
$^\dagger$ indicates infeasibility due to memory constraints; $^\ddagger$ means infeasible due to time limits, and -- means not applicable.
}
\label{tab:alg_comparison}
\end{table*}

We first evaluate the optimal Dec-POMDP algorithm MAA*.
As a finite-horizon method, we test MAA* with horizons of 10 and 20.
However, we are unable to obtain a valid policy even for horizon 10 due to MAA*'s excessive memory consumption.
We track the memory usage of MAA* over the first 60 seconds and compare it with other planning methods on the problem $\text{MACTP}\langle3,2,5\rangle$.
As shown in \Cref{fig:memory_usage}, MAA* quickly exhausts memory and the program terminates shortly afterward.
In contrast, IDPP maintains consistently low memory usage, as it avoids reasoning over every possible joint history.
This suggests that building the sufficient statistics to compute optimal policies may be impractical in large (Det-)Dec-POMDPs.

\begin{figure}[t!]
    \centering
    \includegraphics[width=\linewidth]{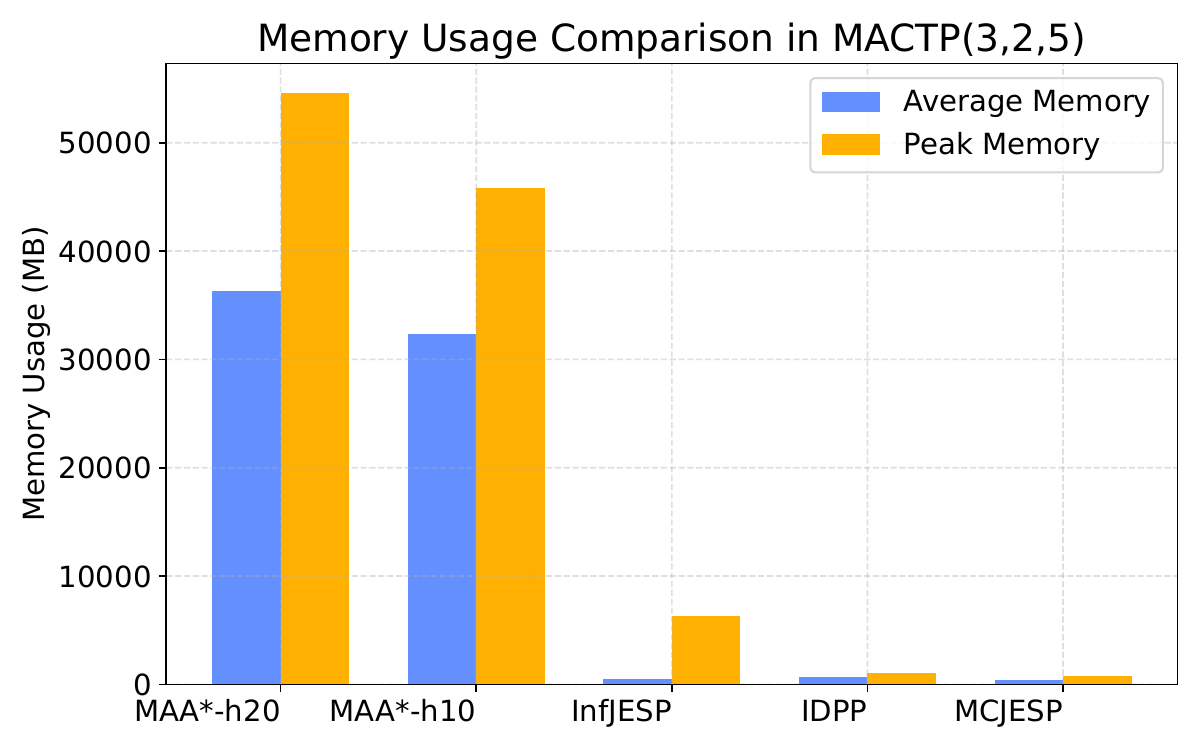}
    \caption{
    Comparison of average and peak memory usage (in MB) for different planning algorithms on the $\text{MACTP}\langle3,2,5\rangle$ problem.
    For MAA*, memory usage is recorded over the first 60 seconds, after which the program was terminated due to memory exhaustion in both horizon-10 and horizon-20 settings.
    For all other planners, memory usage is tracked until the problem is successfully solved.
    Algorithms are presented in descending order of peak memory usage.
    }
    \label{fig:memory_usage}
\end{figure}
\begin{figure}[t!]
    \centering
    \includegraphics[width=0.9\linewidth]{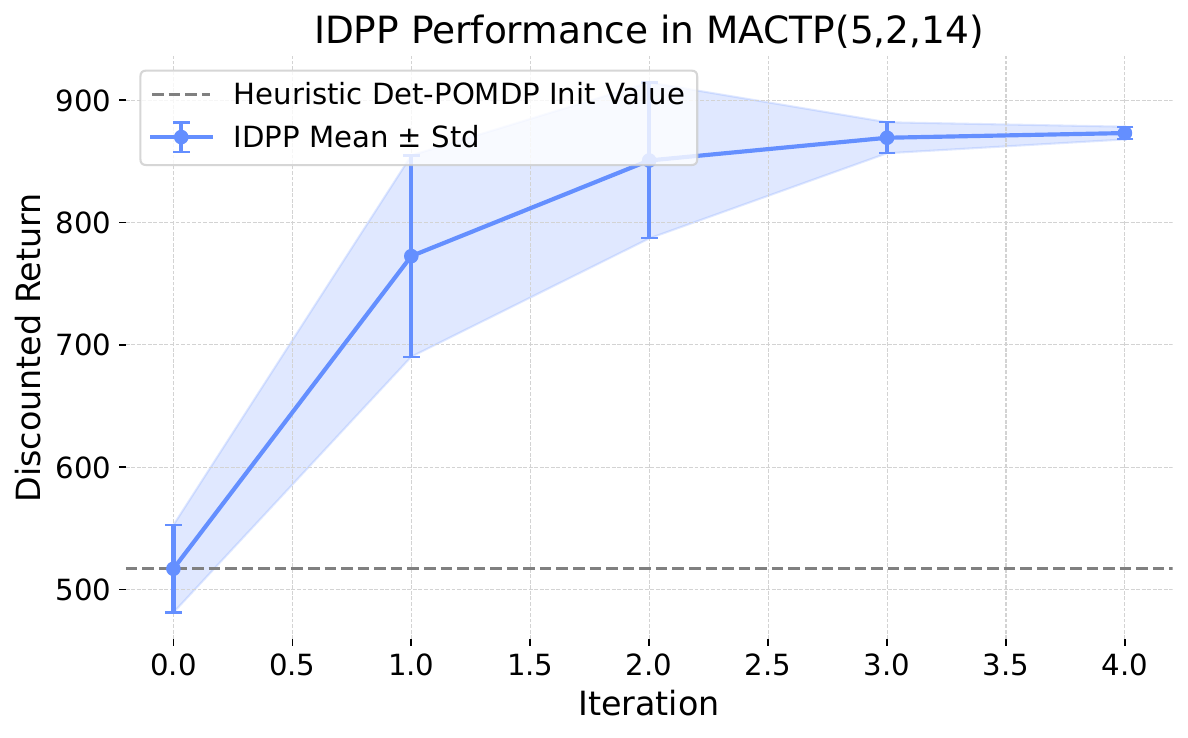}
    \includegraphics[width=0.9\linewidth]{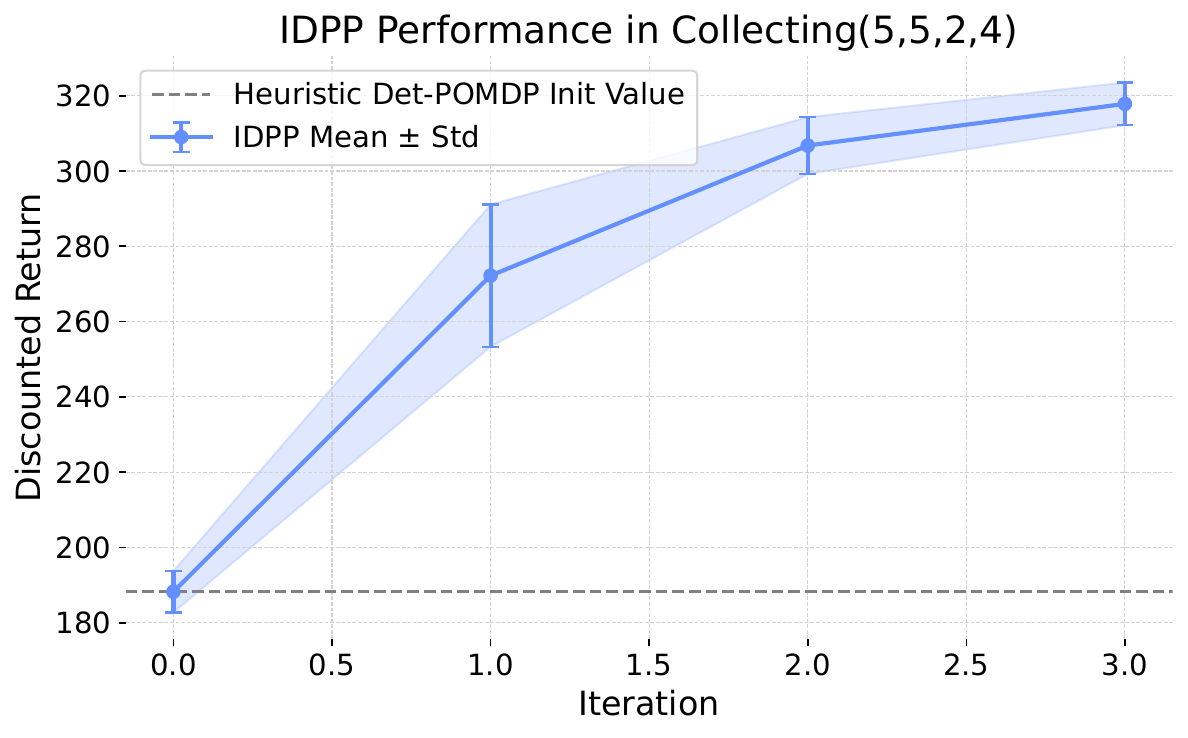}
    \caption{IDPP's performance across iterations in problem instances $\text{MACTP}\langle5,2,14\rangle$ and  $\text{Collecting}\langle5,5,2,4\rangle$.}
    \label{fig:convergence_plots}
\end{figure}

The performance of other algorithms is summarized in \Cref{tab:alg_comparison}.  
InfJESP outperforms existing methods on smaller problems (e.g., MACTP$\langle3,2,5\rangle$ and Collecting$\langle4,3,2,2\rangle$) by using the exact model and SARSOP to optimally compute each agent’s best response.  
However, it does not scale well to larger instances.  
MCJESP, InfJESP’s successor, achieves better scalability while maintaining good performance by constructing each agent’s FSC node-by-node using Monte Carlo planning (POMCP) within a fixed time budget (1 second per node in our experiments). More planning time may improve results further.  

Despite its simplicity, our heuristic initialization (\Cref{sec:init}) provides competitive performance relative to MCJESP at significantly lower computational cost.  
This demonstrates its effectiveness as a strong starting point for IDPP, which further improves solutions as shown in \Cref{fig:convergence_plots}.  
On large instances where InfJESP fails, IDPP consistently outperforms other methods with significant less computation time.  
By leveraging a deterministic POMDP solver in each iteration, IDPP achieves more accurate and efficient planning than MCJESP, leading to higher-quality Nash equilibrium policies.  
However, we note this advantage applies specifically to Det-Dec-POMDPs.  
%

Finally, MARL methods MAPPO and IQL, relying solely on partial observations and sparse rewards, successfully learn policies for most tasks.  
This highlights the power of recurrent networks in handling partial observability.  
However, due to function approximation errors, their performance is generally inferior to model-based planners.  
Interestingly, IQL outperforms MAPPO in discounted return across most problems, despite MAPPO’s reputation as a state-of-the-art MARL method.  
Analysis reveals both succeed in task completion in most runs, but MAPPO tends to generate longer trajectories, lowering its discounted return.  
This may stem from (1) discrete action spaces favoring Q-learning methods like IQL, and (2) policy gradient methods like MAPPO emphasizing long-term optimization.

\section{Discussion of Contributions and Limitations}

In this article, we introduce the class of Deterministic Decentralized POMDPs (Det-Dec-POMDPs), a natural extension of Deterministic POMDPs \cite{bonetDeterministicPOMDPsRevisited2009} to the multi-agent setting.  
This model is also a further simplification of Quasi-Deterministic Dec-POMDPs \cite{besse2009quasi}, assuming deterministic observations. 
Such a framework is well suited for problems where uncertainty stems solely from the initial state, and both actions and observations are deterministic—such as high-level task planning in multi-robot systems, including some navigation and path planning applications. 
We then propose IDPP, a practical JESP variant aimed at efficiently solving large-scale Det-Dec-POMDPs.  
The main idea is intuitive and effective: ``choosing the right tool for the right subproblem," where we decompose the large Det-Dec-POMDP into a sequence of individual agents' Det-POMDPs to leverage powerful Det-POMDP solvers.  
As a result, IDPP becomes a highly efficient Det-Dec-POMDP solver that outperforms existing methods, to our knowledge, in this specific problem class.  
Moreover, we contribute two scalable benchmarks to facilitate research on scalability.

While IDPP is efficient for Det-Dec-POMDPs, it is not suitable for general Dec-POMDPs with stochastic transitions or observations.  
Therefore, it should be applied only when environment dynamics are deterministic.  
Another limitation is that our current IDPP implementation is single-threaded, so further speedups may be achieved through parallelization.

\section{Conclusion}

Many high-level robotic decision-making problems can be naturally modeled with deterministic actions and observations, where uncertainty primarily stems from the initial state.
Motivated by this, we introduce the Det-Dec-POMDP framework to capture such structure, along with IDPP, a practical solver adapted from a JESP variant for solving large Det-Dec-POMDPs to support future applications.
Our work may open a promising direction for planning in multi-agent partially observable domains where full stochasticity is unnecessary.

\section*{Acknowledgment}
This work has been supported by the EPSRC Energy Programme under UKAEA/EPSRC Fusion Grant 2022/2027 No. EP/W006839/1.
\bibliography{aaai2026}

\begin{thebibliography}{31}
\providecommand{\natexlab}[1]{#1}

\bibitem[{Amato, Bernstein, and
  Zilberstein(2010)}]{amatoOptimizingFixedsizeStochastic2010}
Amato, C.; Bernstein, D.~S.; and Zilberstein, S. 2010.
\newblock Optimizing Fixed-Size Stochastic Controllers for {{POMDPs}} and
  Decentralized {{POMDPs}}.
\newblock \emph{Autonomous Agents and Multi-Agent Systems}, 21(3): 293--320.

\bibitem[{Bai et~al.(2011)Bai, Hsu, Lee, and Ngo}]{baiMonteCarloValue2011}
Bai, H.; Hsu, D.; Lee, W.~S.; and Ngo, V.~A. 2011.
\newblock Monte {{Carlo}} Value Iteration for Continuous-State {{POMDPs}}.
\newblock In \emph{Algorithmic {{Foundations}} of {{Robotics IX}}: {{Selected
  Contributions}} of the {{Ninth International Workshop}} on the {{Algorithmic
  Foundations}} of {{Robotics}}}, 175--191. Springer.

\bibitem[{Bernstein et~al.(2009)Bernstein, Amato, Hansen, and
  Zilberstein}]{bernstein2009policy}
Bernstein, D.~S.; Amato, C.; Hansen, E.~A.; and Zilberstein, S. 2009.
\newblock Policy iteration for decentralized control of Markov decision
  processes.
\newblock \emph{Journal of Artificial Intelligence Research}, 34: 89--132.

\bibitem[{Bernstein et~al.(2002)Bernstein, Givan, Immerman, and
  Zilberstein}]{bernsteinComplexityDecentralizedControl2002}
Bernstein, D.~S.; Givan, R.; Immerman, N.; and Zilberstein, S. 2002.
\newblock The Complexity of Decentralized Control of {{Markov}} Decision
  Processes.
\newblock \emph{Mathematics of operations research}, 27(4): 819--840.

\bibitem[{Bernstein, Hansen, and Zilberstein(2005)}]{bernstein2005bounded}
Bernstein, D.~S.; Hansen, E.~A.; and Zilberstein, S. 2005.
\newblock Bounded policy iteration for decentralized POMDPs.
\newblock In \emph{Proceedings of the nineteenth international joint conference
  on artificial intelligence (IJCAI)}, 52--57.

\bibitem[{Besse and Chaib-Draa(2009)}]{besse2009quasi}
Besse, C.; and Chaib-Draa, B. 2009.
\newblock Quasi-deterministic partially observable markov decision processes.
\newblock In \emph{International Conference on Neural Information Processing},
  237--246. Springer.

\bibitem[{Bonet(2009)}]{bonetDeterministicPOMDPsRevisited2009}
Bonet, B. 2009.
\newblock Deterministic {{POMDPs Revisited}}.
\newblock In \emph{Proceedings of the {{Twenty-Fifth Conference}} on
  {{Uncertainty}} in {{Artificial Intelligence}} ({{UAI}})}, 59--66. AUAI
  Press.

\bibitem[{Dibangoye et~al.(2016)Dibangoye, Amato, Buffet, and
  Charpillet}]{dibangoye2016optimally}
Dibangoye, J.~S.; Amato, C.; Buffet, O.; and Charpillet, F. 2016.
\newblock Optimally solving Dec-POMDPs as continuous-state MDPs.
\newblock \emph{Journal of Artificial Intelligence Research}, 55: 443--497.

\bibitem[{Hochreiter and Schmidhuber(1997)}]{hochreiter1997long}
Hochreiter, S.; and Schmidhuber, J. 1997.
\newblock Long short-term memory.
\newblock \emph{Neural Computation}, 9(8): 1735--1780.

\bibitem[{Kumar and Zilberstein(2012)}]{kumar2012anytime}
Kumar, A.; and Zilberstein, S. 2012.
\newblock Anytime planning for decentralized POMDPs using expectation
  maximization.
\newblock \emph{arXiv preprint arXiv:1203.3490}.

\bibitem[{Kumar, Zilberstein, and Toussaint(2015)}]{kumar2015probabilistic}
Kumar, A.; Zilberstein, S.; and Toussaint, M. 2015.
\newblock Probabilistic inference techniques for scalable multiagent decision
  making.
\newblock \emph{Journal of Artificial Intelligence Research}, 53: 223--270.

\bibitem[{Kurniawati, Hsu, and Lee(2008)}]{kurniawati2008sarsop}
Kurniawati, H.; Hsu, D.; and Lee, W.~S. 2008.
\newblock Sarsop: Efficient point-based pomdp planning by approximating
  optimally reachable belief spaces.
\newblock In \emph{Robotics: Science and systems}, volume 2008. Citeseer.

\bibitem[{Lim, Sun, and Hsu(2011)}]{NIPS2011_eefc9e10}
Lim, Z.; Sun, L.; and Hsu, D. 2011.
\newblock Monte Carlo Value Iteration with Macro-Actions.
\newblock In Shawe-Taylor, J.; Zemel, R.; Bartlett, P.; Pereira, F.; and
  Weinberger, K., eds., \emph{Advances in Neural Information Processing
  Systems}, volume~24. Curran Associates, Inc.

\bibitem[{Littman(1996)}]{littmanAlgorithmsSequentialDecisionmaking1996}
Littman, M.~L. 1996.
\newblock \emph{Algorithms for Sequential Decision-Making}.
\newblock Brown University.
\newblock ISBN 0-591-16350-0.

\bibitem[{MacDermed and Isbell(2013)}]{macdermed2013point}
MacDermed, L.~C.; and Isbell, C.~L. 2013.
\newblock Point based value iteration with optimal belief compression for
  Dec-POMDPs.
\newblock \emph{Advances in neural information processing systems}, 26.

\bibitem[{Nair et~al.(2003)Nair, Tambe, Yokoo, Pynadath, and
  Marsella}]{nair2003taming}
Nair, R.; Tambe, M.; Yokoo, M.; Pynadath, D.; and Marsella, S. 2003.
\newblock Taming decentralized POMDPs: Towards efficient policy computation for
  multiagent settings.
\newblock In \emph{IJCAI}, volume~3, 705--711.

\bibitem[{Oliehoek et~al.(2017)Oliehoek, Spaan, Terwijn, Robbel
  et~al.}]{oliehoek2017madp}
Oliehoek, F.~A.; Spaan, M.~T.; Terwijn, B.; Robbel, P.; et~al. 2017.
\newblock The MADP Toolbox: An open source library for planning and learning in
  (multi-) agent systems.
\newblock \emph{Journal of Machine Learning Research}, 18(89): 1--5.

\bibitem[{Oliehoek, Spaan, and Vlassis(2008)}]{oliehoek2008optimal}
Oliehoek, F.~A.; Spaan, M.~T.; and Vlassis, N. 2008.
\newblock Optimal and approximate Q-value functions for decentralized POMDPs.
\newblock \emph{Journal of Artificial Intelligence Research}, 32: 289--353.

\bibitem[{Pajarinen and Peltonen(2011{\natexlab{a}})}]{pajarinen2011efficient}
Pajarinen, J.; and Peltonen, J. 2011{\natexlab{a}}.
\newblock Efficient planning for factored infinite-horizon DEC-POMDPs.
\newblock In \emph{IJCAI Proceedings-International Joint Conference on
  Artificial Intelligence}, volume~22, 325.

\bibitem[{Pajarinen and Peltonen(2011{\natexlab{b}})}]{pajarinen2011periodic}
Pajarinen, J.; and Peltonen, J. 2011{\natexlab{b}}.
\newblock Periodic finite state controllers for efficient POMDP and DEC-POMDP
  planning.
\newblock \emph{Advances in neural information processing systems}, 24.

\bibitem[{Rashid et~al.(2018)Rashid, Samvelyan, de~Witt, Farquhar, Foerster,
  and Whiteson}]{Rashid2018QMIX}
Rashid, T.; Samvelyan, M.; de~Witt, C.~S.; Farquhar, G.; Foerster, J.; and
  Whiteson, S. 2018.
\newblock QMIX: Monotonic Value Function Factorisation for Deep Multi-Agent
  Reinforcement Learning.
\newblock In \emph{Proceedings of the 35th International Conference on Machine
  Learning}.

\bibitem[{Schutz et~al.(2025)Schutz, You, Mattamala, Caliskanelli, Lacerda, and
  Hawes}]{schutz2025finitestatecontrollerbasedoffline}
Schutz, A.; You, Y.; Mattamala, M.; Caliskanelli, I.; Lacerda, B.; and Hawes,
  N. 2025.
\newblock A Finite-State Controller Based Offline Solver for Deterministic
  POMDPs.
\newblock \emph{arXiv preprint arXiv:2505.00596}.

\bibitem[{Silver and Veness(2010)}]{silver2010monte}
Silver, D.; and Veness, J. 2010.
\newblock Monte-Carlo planning in large POMDPs.
\newblock \emph{Advances in neural information processing systems}, 23.

\bibitem[{Song, Liao, and Carin(2016)}]{song2016solving}
Song, Z.; Liao, X.; and Carin, L. 2016.
\newblock Solving DEC-POMDPs by Expectation Maximization of Value Function.
\newblock In \emph{AAAI Spring Symposia}.

\bibitem[{Stadler, Banfi, and Roy(2023)}]{stadler2023approximating}
Stadler, M.; Banfi, J.; and Roy, N. 2023.
\newblock Approximating the value of collaborative team actions for efficient
  multiagent navigation in uncertain graphs.
\newblock In \emph{Proceedings of the International Conference on Automated
  Planning and Scheduling}, volume~33, 677--685.

\bibitem[{Sunehag et~al.(2017)Sunehag, Lever, Gruslys, Czarnecki, Zambaldi,
  Jaderberg, Lanctot, Sonnerat, Leibo, Tuyls et~al.}]{sunehag2017value}
Sunehag, P.; Lever, G.; Gruslys, A.; Czarnecki, W.~M.; Zambaldi, V.; Jaderberg,
  M.; Lanctot, M.; Sonnerat, N.; Leibo, J.~Z.; Tuyls, K.; et~al. 2017.
\newblock Value-decomposition networks for cooperative multi-agent learning.
\newblock \emph{arXiv preprint arXiv:1706.05296}.

\bibitem[{Szer, Charpillet, and Zilberstein(2012)}]{szerMAAHeuristicSearch2012}
Szer, D.; Charpillet, F.; and Zilberstein, S. 2012.
\newblock {{MAA}}*: {{A Heuristic Search Algorithm}} for {{Solving
  Decentralized POMDPs}}.
\newblock arXiv:1207.1359.

\bibitem[{Tan(1993)}]{tan1993multi}
Tan, M. 1993.
\newblock Multi-agent reinforcement learning: Independent vs. cooperative
  agents.
\newblock In \emph{Proceedings of the tenth international conference on machine
  learning}, 330--337.

\bibitem[{You et~al.(2021)You, Thomas, Colas, and Buffet}]{you2021solving}
You, Y.; Thomas, V.; Colas, F.; and Buffet, O. 2021.
\newblock Solving infinite-horizon Dec-POMDPs using finite state controllers
  within JESP.
\newblock In \emph{2021 IEEE 33rd International Conference on Tools with
  Artificial Intelligence (ICTAI)}, 427--434. IEEE.

\bibitem[{You et~al.(2023)You, Thomas, Colas, and Buffet}]{you2023monte}
You, Y.; Thomas, V.; Colas, F.; and Buffet, O. 2023.
\newblock Monte-Carlo search for an equilibrium in Dec-POMDPs.
\newblock In \emph{Uncertainty in Artificial Intelligence}, 2444--2453. PMLR.

\bibitem[{Yu et~al.(2021)Yu, Gleave, Dennis, Kant, Levine, and
  Russell}]{Yu2021MAPPO}
Yu, C.; Gleave, A.; Dennis, M.; Kant, N.; Levine, S.; and Russell, S. 2021.
\newblock The Surprising Effectiveness of PPO in Cooperative Multi-Agent Games.
\newblock In \emph{Advances in Neural Information Processing Systems}.

\end{thebibliography}

\end{document}